\newtheorem{deftn}{Definition}
\newtheorem{prop}{Proposition}
\newcommand{\sexyname}{CCA-Attention\xspace}
\newcommand{\sexynamellm}{CCA-LLM\xspace}
\def\ie{\mbox{\textit{i.e.}}}
\def\eg{\mbox{\textit{e.g.}}}
\def\wrt{\mbox{\textit{w.r.t.}}}
\definecolor{LightBlue}{rgb}{0.925,0.957,1}
\def\mytitle{Core Context Aware Transformers for Long Context Language Modeling}
\def\revised{\textcolor{black}}
\def\yz{\textcolor{black}}
\icmltitlerunning{\mytitle}
\begin{document}

\twocolumn[
\icmltitle{\mytitle}

\icmlsetsymbol{equal}{*}

\begin{icmlauthorlist}
\icmlauthor{Yaofo Chen}{equal,scut}
\icmlauthor{Zeng You}{equal,scut,pcl}
\icmlauthor{Shuhai Zhang}{equal,scut,pzl}
\icmlauthor{Haokun Li}{scut,pcl}
\icmlauthor{Yirui Li}{scut}
\icmlauthor{Yaowei Wang}{pcl,hit}
\icmlauthor{Mingkui Tan}{scut,pzl,key}
\end{icmlauthorlist}

\icmlaffiliation{scut}{South China University of Technology}
\icmlaffiliation{pcl}{Peng Cheng Laboratory}
\icmlaffiliation{pzl}{Pazhou Laboratory}
\icmlaffiliation{hit}{Harbin Institute of Technology}
\icmlaffiliation{key}{Key Laboratory of Big Data and Intelligent Robot, Ministry of Education}

\icmlcorrespondingauthor{Mingkui Tan}{mingkuitan@scut.edu.cn}
\icmlcorrespondingauthor{Yaowei Wang}{wangyw@pcl.ac.cn}

\icmlkeywords{Machine Learning, ICML}

\vskip 0.3in
]




\printAffiliationsAndNotice{\icmlEqualContribution}

\begin{abstract}
Transformer-based Large Language Models (LLMs) have exhibited remarkable success in extensive tasks primarily attributed to the self-attention mechanism, which requires a token to consider all preceding tokens as its context to compute attention.
However, when the context length $L$ becomes very large (\eg, 128K), the amount of potentially redundant information in the context tends to increase.
The redundant context not only hampers the modeling representation performance but also incurs unnecessary computational and storage overhead.
In this paper, we propose a plug-and-play Core Context Aware (CCA) Attention for efficient long-context modeling, comprising two complementary modules:
1) \textit{Globality-aware pooling module} groups input tokens and dynamically compresses each group into one \textit{core token} based on their significance.
In this way, our method automatically focuses and strengthens core context while diminishing redundancy during the learning process, leading to effective long-term dependency modeling.
2) \textit{Locality-preserving module} incorporates neighboring tokens to preserve local context for detailed representation.
Notably, our \sexyname is able to replace the self-attention module in existing LLMs with minimal fine-tuning cost.
Extensive experimental results show the superiority of our method in both long-context modeling and computational efficiency over state-of-the-art methods.
\end{abstract}

\section{Introduction}
Large language models (LLMs)~\cite{brown2020gpt3,openai2023gpt4,touvron2023llama,liu2024deepseek} have demonstrated exceptional proficiency across various applications by effectively modeling extended contexts, particularly in tasks involving natural language understanding and generation~\cite{long2022training,chang2024survey}.
The remarkable success of Transformer-based LLMs is predominantly credited to the self-attention mechanism~\cite{vaswani2017attention}, which requires each token to incorporate all preceding tokens as its context for attention calculation. \revised{In this mechanism, the context of a token refers to the sequence of tokens that precede it. By leveraging self-attention, LLMs are able to capture long-range dependencies and generate coherent and contextually relevant outputs.}

\revised{The ability to process longer contexts has been a key factor in improving the performance of LLMs across a wide range of tasks, particularly beneficial for tasks requiring document-level understanding, such as summarization of extended texts, and multi-turn dialogue~\cite{kitaev2019reformer,wei2022chain,jiangminference}.
More importantly, recent advancements in LLMs, such as OpenAI-o1~\cite{openai2023gpt4} and DeepSeek-R1~\cite{guo2025deepseek}, have demonstrated that extended contexts significantly enhance reasoning capabilities, enabling models to solve intricate problems that require multi-step inference and contextual understanding. This trend underscores the importance of extending the context length in LLMs to enhance modeling capabilities.}

However, as the context length $L$ scales to extremely large magnitudes (\eg, 128K), it becomes impractical for a token to maintain significant semantic connections with all tokens within such an extensive context.
From this perspective, it is natural to consider that not all parts of the context contribute equally to a token's representation. Instead, the context can be viewed as comprising two primary aspects: core context, which captures essential semantic connections, and redundant context, which contains less critical or repetitive information.
The redundant context may hamper LLMs from capturing dependencies among crucial tokens, degrading representation performance.
In self-attention, this redundancy manifests as a highly sparse distribution of attention scores, with a substantial proportion disproportionately assigned to a limited number of tokens.
Such sparsity in attention score distribution has been observed across different attention heads in most layers of LLMs, as shown in recent studies~\cite{beltagy2020longformer,zaheer2020big,xiao2024efficient}.
This sparsity in attention scores introduces unnecessary computational and storage overhead, especially for extremely long contexts.

To address the above issues, numerous studies have been advanced to eliminate the redundancy and enhance attention computational efficiency.
StreamingLLM~\cite{xiao2024efficient} and LM-Infinite~\cite{han2023lm} simply maintain the attention over only the initial and last several tokens, ignoring the attention connection among remaining tokens.
Besides, MInference~\cite{jiangminference} introduces an efficient mixed attention mechanism comprising A-shape, vertical-slash, and block-sparse attentions, with the mixed attention patterns determined offline based on some samples.
These methodologies typically involve computing only a portion of the attention to approximate full attention, thus compromising the connections among different tokens.
In question-answering tasks, the crucial information can be located across any position in the input tokens. Consequently, it is crucial for the model to have the capability to leverage information from any position within the input text~\cite{liu2024lost}.
In this sense, these methods with fixed sparsity patterns may lead to incomplete comprehension of the long context.
Therefore, how to ensure the information exchange among tokens in the attention while reducing the context redundancy is still an open question.

In this paper, we propose an efficient \textbf{Core Context Aware (CCA) Attention} mechanism, which is designed to efficiently capture both global and local dependencies within long contexts.
Specifically, our \sexyname consists of two complementary components:
1) \textit{Globality-aware pooling module} first partitions the input tokens into groups and derives \textit{core tokens} by compressing the input tokens in each group based on their significance.
We perform attention on these core tokens instead of original input tokens to efficiently extract long-term contextual information.
These number-reduced core tokens are more compact representations than the original ones, which enables our attention method to automatically focus on the core context.
In this way, our method is able to 
eliminate the context redundancy and reduce unnecessary computational overhead.
However, the globality-aware pooling module is only able to capture long-range and coarse-grained information.
2) To address this issue, we propose a \textit{Locality-preserving module} that captures the local and fine-grained context by focusing on neighboring tokens, serving as a complement for the globality-aware pooling module.
By fusing the insights from both these two modules, our method not only excels in long context modeling but also achieves this with a significant reduction in computational costs and storage demands.
Our contributions are as follows:

\begin{itemize}
    \item We propose a plug-and-play Core Context Aware Attention for efficient long-context modeling. Our \sexyname reduces computational complexity to linear complexity by taking a set of core tokens as efficient proxies for attention. Unlike traditional efficient attention methods that require extensive retraining, our \sexyname can be easily integrated into pretrained LLMs with minimal fine-tuning effort.
    \item We develop a dynamic globality-aware pooling module that adaptively derives core tokens based on their importance. By compressing input tokens into core tokens, our method captures essential information more effectively than static or random token selection approaches. Our strategy focuses on the most relevant global context, leading to more accurate and effective long-term dependency modeling.
    \item We achieve significant improvements compared with other baseline methods in both long-context modeling performance and computational efficiency. Our experimental results show that \sexyname not only outperforms existing efficient attention mechanisms in long-context modeling but also achieves a 7.9$\times$ speedup compared to full self-attention when processing 128K token contexts, demonstrating substantial efficiency gains with compatible accuracy.
\end{itemize}

\begin{figure*}[t]
  \centering
  \includegraphics[width=\linewidth]{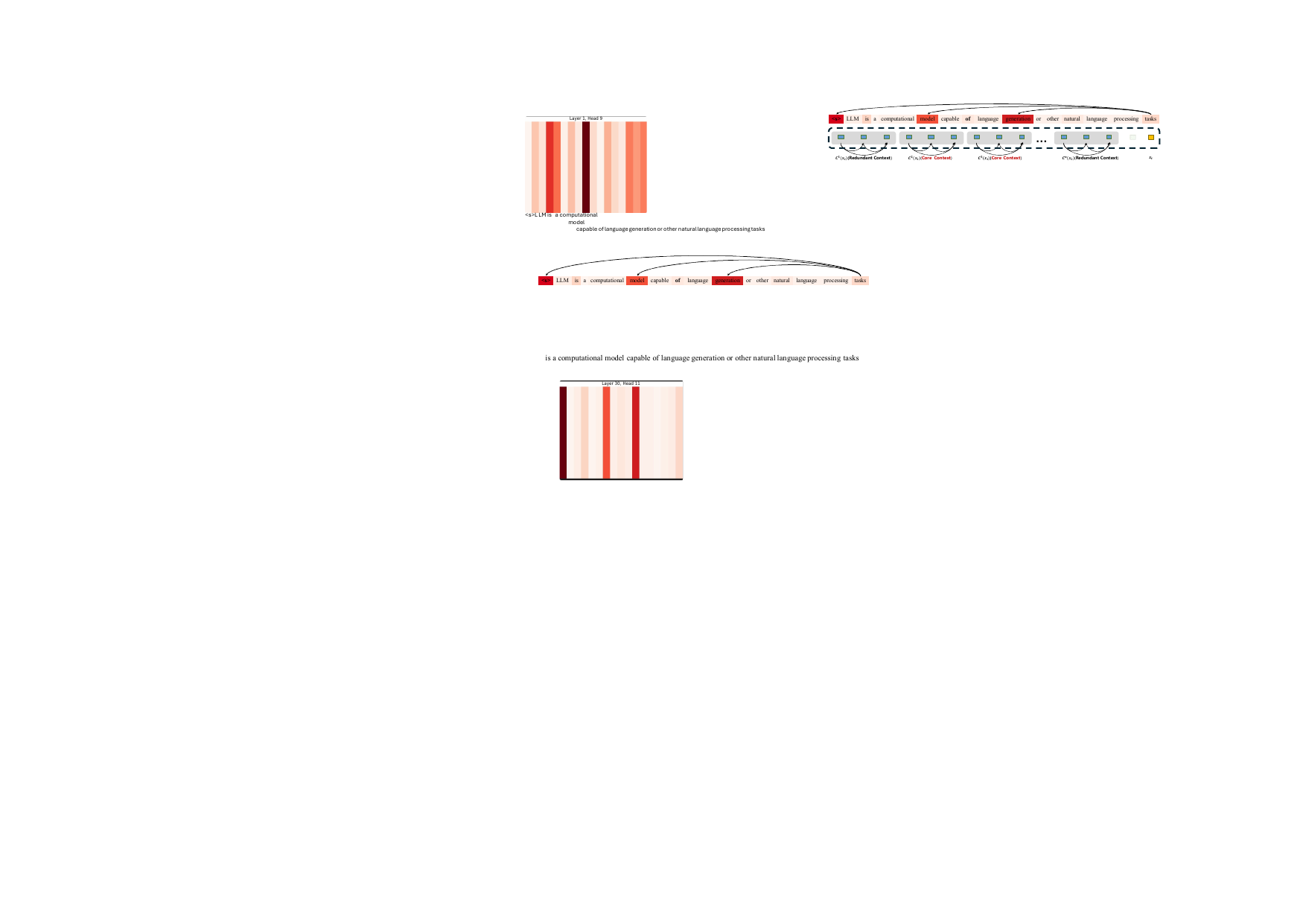}
  \caption{Illustration of core contexts and redundant contexts.
  We show attention scores of the last token relative to the other tokens in LLaMA2-7B (darker shadows indicate higher attention scores). The last token exhibits high attention scores towards core contexts. The remains are considered as redundant contexts, introducing unnecessary computational overhead for attention.
  }
  \label{fig:illustration}
\end{figure*}

\section{Related Work}

\textbf{Efficient Attention}. Self-attention is a fundamental module in Transformer-based Large Language Models (LLMs) ~\cite{brown2020gpt3,openai2023gpt4,touvron2023llama}. It captures the global relationship between each token throughout the input sequence. However, the computational complexity of self-attention increases quadratically with the sequence length, thereby limiting the application of LLMs to long documents. Various works have sought to mitigate this complexity through approaches such as sparse attention ~\cite{beltagy2020longformer,zaheer2020big,ding2023longnet} and linear attention approximations ~\cite{choromanski2020masked, katharopoulos2020transformers, sun2023retentive}.
Specifically, Longformer~\cite{beltagy2020longformer} and BigBird~\cite{zaheer2020big} employ sparse attention mechanisms to handle long sequences by utilizing strided attention patterns, where attention is only paid at fixed intervals. Linear Transformer~\cite{katharopoulos2020transformers} and RetNet~\cite{sun2023retentive} reformulate self-attention as a linear dot-product of kernel feature maps and leverages the associativity property of matrix products to achieve linear complexity.

Recently, LongLoRA~\cite{longlora2023chen} designs a shifted sparse attention mechanism that computes attention among grouped input tokens. To facilitate communication between groups, this approach shifts the group partition by half the group size. 
StreamingLLM~\cite{xiao2024efficient} and LM-Infinite~\cite{han2023lm} prioritize attention on the initial and final tokens, effectively disregarding the intermediate tokens.
InfLLM~\cite{xiao2024infllm} employs a sliding window attention mechanism and a block-level context memory to selectively attend to relevant context information, avoiding noise and reducing computational costs. 
MInference~\cite{jiangminference} accelerates long-context LLM inference by applying three distinct sparse attention patterns 
with optimized GPU kernels.
However, these methods can not ensure that each token has access to all preceding tokens, leading to inferior performance in tasks requiring comprehensive long-context understanding.
Instead, we propose a globality-aware pooling module that each token can communicate with previous tokens via number-reduced core tokens.

\textbf{Long-context Large Language Models (LLMs)}. LLMs are often pretrained with a relatively small and predefined context length due to computational cost constraints, such as 4K for LLaMA2~\cite{touvron2023llama2}.
This limitation restricts the applicability of LLMs to tasks with long documents.
Recently, several attempts have been made to extend the context length of LLMs through continuous training.
Position Interpolation~\cite{chen2023extending} addresses this by linearly down-scaling the input position indices to fit within the original context window size, thereby extending the context length of RoPE-based LLMs.
Furthermore, YaRN~\cite{peng2024yarn}  enhances performance by combining interpolation techniques with dynamic scaling. Beyond modifications to position embeddings, other efforts focus on designing more efficient attention mechanisms~\cite{longlora2023chen,dao2022flashattention,dao2024flashattention} for context window extension.
Our method is orthogonal to position embedding methods.
During inference, our approach accelerates the forward propagation process, which cannot be achieved through position embedding modifications alone.
\yz{Some context compression works attempt to achieve long context modeling by either compressing features via auxiliary networks~\citep{rae2020compressive} or compressing context with extra specific tokens~\citep{Mu23learningto,qin2023nugget,mohtashami2023random,zhanglong,qin2024dodo}. In contrast, our method dynamically identifies and enhances task-relevant core-context while suppressing redundant information. Unlike sequence-length-oriented compression techniques, our core-context-aware mechanism optimizes redundancy directly within self-attention computation, enabling more effective long-context modeling.}

\section{Core Context Aware Attention}

\begin{figure*}
  \centering
  \includegraphics[width=\linewidth]{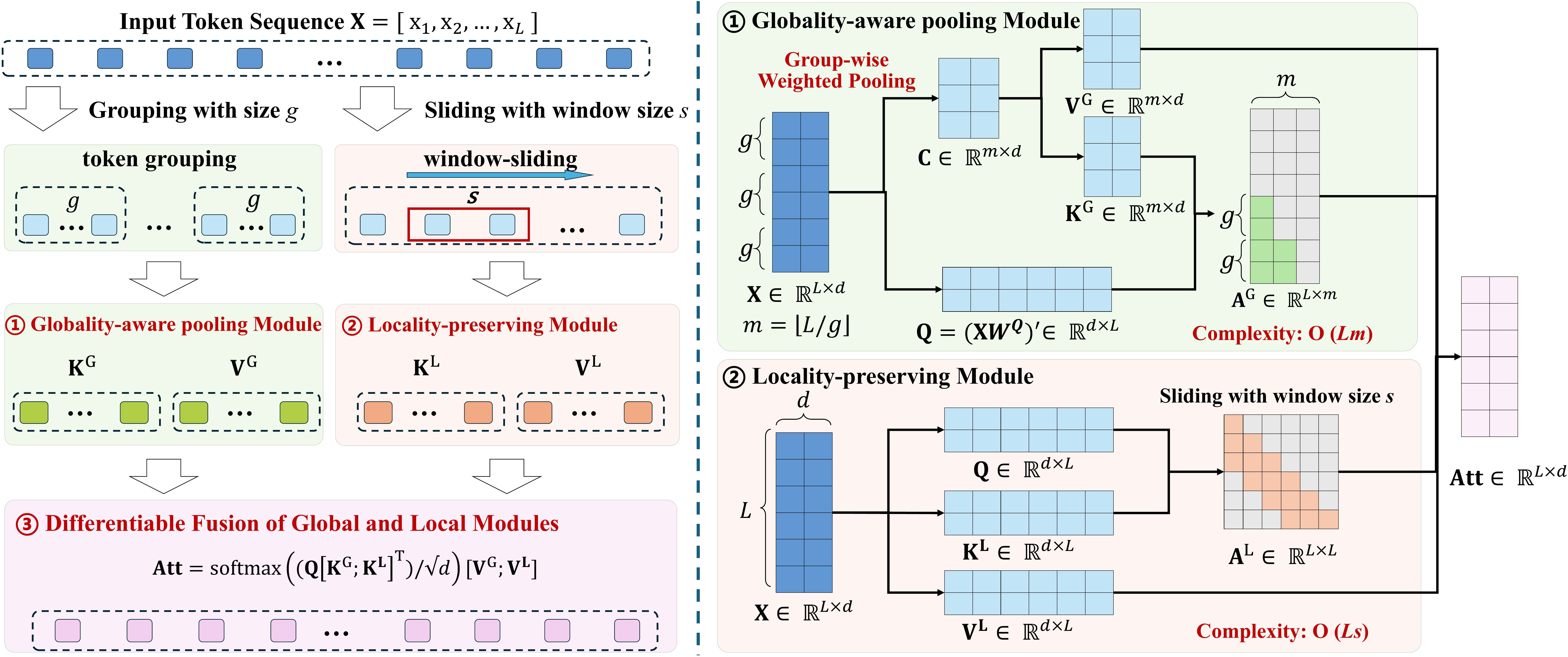}
  \caption{Illustration of \sexyname, which includes two components: 1) Globality-aware pooling module encapsulates the input tokens $\bf X$ into core tokens $\bf C$ according to the importance (Eqn.~(\ref{eq:pooling})). 
  The core tokens $\bf C$ serve as representative proxies of $\bf X$ for attention, thereby reducing computational costs.
  2) Locality-preserving module incorporates the local context from neighboring tokens, acting as supplement for the globality-aware pooling module.
  We produce the final output $\bf{Att}$ by fusing these two modules based on Eqn.~(\ref{eqn: cca_att}).
  }\label{fig:framework}
\end{figure*}

\subsection{Motivation and Method Overview}

\revised{
Most existing attention-based large language models (LLMs), such as GPT~\cite{brown2020gpt3,openai2023gpt4} and LLaMA~\cite{touvron2023llama}, employ the \textit{next-token prediction}~\cite{vaswani2017attention} paradigm to generate text. Given a sequence of tokens $\mathbf{X}\small{=} [\mathbf{x}_1; \mathbf{x}_2; \dots; \mathbf{x}_L]$, where each token $\mathbf{x}_i \in \mathbb{R}^{1 \times d}$, the model $\theta$ predicts the next token $\mathbf{x}_{t}$ by conditioning on all preceding tokens as its context $C(\mathbf{x}_{t}){=}[\mathbf{x}_{1:t-1}]$. Specifically, the model generates the next token with the highest probability as:}
\begin{equation}
\mathbf{x}_{t}=\arg \max_{\mathbf{x}} P_{\theta}(\mathbf{x}|\mathbf{x}_1,\mathbf{x}_2, \ldots, \mathbf{x}_{t-1}).
\end{equation}
\revised{
However, as the context length $L$ grows, the context inevitably exhibits redundant information. 
This redundancy stems from the inherent nature of natural language: not all contextual information is equally important for the representation of the target token.
These redundant context (\eg, $C^1(\mathbf{x}_t)$ \wrt~$\mathbf{x}_t$ in Figure~\ref{fig:illustration}) have weak semantic relevance to the token $\mathbf{x}_t$, while introducing significant computational overhead.
In contrast, the core context (\eg, $C^2(\mathbf{x}_t)$  \wrt~$\mathbf{x}_t$ in Figure~\ref{fig:illustration}) refers to the contextual information that is highly relevant to the token $\mathbf{x}_t$. This information is crucial for the token's representation. Therefore, in long-context modeling, the model should prioritize the core context over redundant parts.
To this end, most existing methods \cite{beltagy2020longformer,ding2023longnet,longlora2023chen} employ sparse attention with predefined and fixed patterns. Unfortunately, they often overlook the importance of maintaining comprehensive information exchange among tokens, which may hinder the performance of long-context modeling tasks.}

In this paper, we seek to reduce the context redundancy associated with full self-attention. To achieve this, we propose a Core Context Aware Attention (\sexyname), which employs globality-aware pooling and locality-preserving modules to capture both global and local dependencies within a long context.
As shown in Figure~\ref{fig:framework}, the globality-aware pooling module operates by generating representative core tokens from segmented groups of the input sequence. It then computes attention using these reduced-number core tokens, thereby reducing the context redundancy and computational cost (see Section~\ref{sec:global_attention}).
However, the globality-aware pooling module mainly focuses on long-range and coarse-grained information and overlooks local context.
To address this limitation, the locality-preserving module is responsible for capturing the local information of the neighborhood tokens to ensure comprehensive coverage (see Section~\ref{sec:local_attention}).
Furthermore, we devise a differentiable fusion strategy to combine the insights from global and local modules (see Section~\ref{sec:attention_fusion}).
This is crucial as it retains the comprehensive understanding ability of the full self-attention within our \sexyname.
The pseudo-code for our proposed \sexyname is presented in Algorithm~\ref{alg:attention}.

\subsection{Globality-aware Pooling Module}\label{sec:global_attention}

The context redundancy aforementioned indicates that computational resources can be dynamically allocated to core contexts while reducing emphasis on the remaining ones.
This could approximate the full self-attention with both reduced redundancy and computational overhead.
Motivated by this, we propose a globality-aware pooling module that dynamically identifies prominent tokens and encapsulates them into a smaller set of core tokens for attention.

Given an input sequence of tokens $\mathbf{X} {=} [\mathbf{x}_1; \mathbf{x}_2; \dots; \mathbf{x}_L]$,
we segment the input sequence $\mathbf{X}$, each group containing $g$ tokens, in total $m\small{=} \lfloor L/g \rfloor$ groups. For simplicity, we denote the $i$-th group by $\mathbf{X}^{\text{G}}_{\mathcal{I}_i} \small{\in} \mathbb{R}^{g \times d}$, 
where $\mathcal{I}_i{=}\{(i{-}1)g+1, (i{-}1)g+2, \ldots ,ig\}$
with ${\bf x}_{ig}$ denoting the last token in the $i$-th group.
To identify prominent tokens in the $i$-th group, we devise a group-wise weighted pooling strategy that employs the last token ${\bf x}_{ig}$ to evaluate the importance.
\yz{This is inspired by attention map visualizations (Section~\ref{sec:static_attn}), which show that important tokens consistently receive high attention scores from subsequent tokens, indicating their significant influence regardless of position within the group.}
Formally, we derive one core token $\mathbf{c}_i$ from each group by
\begin{equation}\label{eq:pooling}
\mathbf{c}_i {=} \text{softmax}\left(\frac{\mathbf{Q}_{ig}\mathbf{K}^{{'}^\top}_{\mathcal{I}_i}}{\sqrt{d}}\right)\mathbf{X}^{\text{G}}_{\mathcal{I}_i}\in \mathbb{R}^{1\times d}, i=1,\ldots,m,
\end{equation}
where $\mathbf{Q}_{ig}$ is the query vector for the last token in the $i$-th group of $\mathbf{Q}_{\mathcal{I}_i}{=}{\bf X}^{\text{G}}_{\mathcal{I}_i}{\bf W}^Q$ and $\mathbf{K}^{'}_{\mathcal{I}_i}{=}{\bf X}^{\text{G}}_{\mathcal{I}_i}{\bf W}^K$, $\mathbf{W}^Q$ and $\mathbf{W}^K$ are learnable parameters. In this way, the core token $\mathbf{c}_i$ encapsulates crucial information of the corresponding group. With $m$ groups in the input sequence $\mathbf{X}$, we derive $m$ core tokens in total, \ie, $\mathbf{C} {=} [\mathbf{c}_1; \mathbf{c}_2; \dots; \mathbf{c}_m]$.

\begin{algorithm}[t]
	\caption{\small The pipeline of Core Context Aware Attention.}
    \label{alg:attention}
    	\begin{algorithmic}[1]\small
    		\REQUIRE Input tokens ${\bf X}\small{=}[{\bf x}_1; {\bf x}_2; \dots; {\bf x}_L]$, parameters ${\bf W}^{Q}$, ${\bf W}^{K}$, ${\bf W}^{V}$, the group size $g$ and local window size $s$.
            \STATE {Calculate the query ${\bf Q} = {\bf X}{{\bf W}^Q}$, \#groups $m={\lfloor L/g \rfloor}$}
            \FOR{$i$ in $\{1, 2, \dots, m\}$}
            \STATE ${\bf X}^{\text{G}}_{\mathcal{I}_i}{=}[{\bf x}_{(i-1)g+1}; {\bf x}_{(i-1)g+2}; \dots; {\bf x}_{ig}]$\\
            \STATE $\mathbf{c}_i {=} \text{softmax}\left(\frac{\mathbf{Q}_{ig}\mathbf{K}^{{'}^\top}_{\mathcal{I}_i}}{\sqrt{d}}\right)\mathbf{X}^{\text{G}}_{\mathcal{I}_i}$, where ${\bf K}^{'}_{\mathcal{I}_i} {=} {\bf X}^{\text{G}}_{\mathcal{I}_i}{{\bf W}^K}$ \\
            \ENDFOR
            \STATE Let ${\bf C}=[{\bf c}_1; {\bf c}_2; \dots; {\bf c}_{m}]$
            \STATE $\mathbf{K}^{\text{G}}{=}{\bf C}{\bf W}^K$, $\mathbf{V}^{\text{G}}{=}{\bf C}{\bf W}^V$ // {\fontsize{8}{15}\selectfont\emph{Globality-aware Pooling Module}}\\
            \STATE $\mathbf{K}^{\text{L}}=\mathbf{X}\mathbf{W}^K$, $\mathbf{V}^{\text{L}}=\mathbf{X}\mathbf{W}^V$ // {\fontsize{8}{15}\selectfont\emph{Locality-preserving Module}}\\ 
            \FOR{$i$ in $\{1, 2, \dots, L\}$}
            \STATE $\widetilde{\mathbf{K}}_{\mathcal{T}_i}^{\text{G}}\small{=}\mathbf{K}^{\text{G}}_{1:j}$, $\widetilde{\mathbf{V}}_{\mathcal{T}_i}^{\text{G}}\small{=}\mathbf{V}^{\text{G}}_{1:j}$, {\fontsize{8}{15}\selectfont $j\small{=}\max(0,\lfloor(i\small{-}s)/g\rfloor)$}
            \STATE $\widetilde{\mathbf{K}}_{\mathcal{U}_i}^{\text{L}}\small{=}\mathbf{K}^{\text{L}}_{k:i}$, $\widetilde{\mathbf{V}}_{\mathcal{U}_i}^{\text{L}}\small{=}\mathbf{V}^{\text{L}}_{k:i}$, {\fontsize{8}{15}\selectfont $k\small{=}\max(1,i\small{-}s\small{-}((i\small{-}s) \bmod g))$}
            \STATE $\mathbf{Att}_i=\text{softmax}( {\mathbf{Q}_i[{\widetilde{\mathbf{K}}_{\mathcal{T}_i}^{\text{G}}}; {\widetilde{\mathbf{K}}_{\mathcal{U}_i}^{\text{L}}}]^\top})/{\sqrt{d}} ) [\widetilde{\mathbf{V}}_{\mathcal{T}_i}^{\text{G}};\widetilde{\mathbf{V}}_{\mathcal{U}_i}^{\text{L}}]$
            \ENDFOR
            \STATE \textbf{Return:} Representations of tokens $\mathbf{Att}\small{=}[\mathbf{Att}_1;  \dots; \mathbf{Att}_L]$
    	\end{algorithmic}
		\label{alg:training}
\end{algorithm}

To reduce the redundancy, we use the sequence of core tokens $\mathbf{C} {=} [\mathbf{c}_1; \mathbf{c}_2; \dots; \mathbf{c}_m]$ instead of the original tokens $\mathbf{X}$ for attention computation. This substitution reduces the dimensionality from $\mathbf{X} \small{\in} \mathbb{R}^{L \times d}$ to $\mathbf{C} \small{\in} \mathbb{R}^{m \times d}$, thereby reducing the computational and storage complexity.
For each query $\mathbf{Q}_i$, tokens that are distant from it typically exhibit lower relevance and are more likely to be redundant.
Formally, we adopt core tokens to
calculate the key and value matrices $\mathbf{K}^{\text{G}}$, $\mathbf{V}^{\text{G}}$ for these tokens as follows
\begin{equation}\label{eq:global_kv}
    \begin{gathered}
        \widetilde{\mathbf{K}}_{\mathcal{T}_i}^{\text{G}}\small{=}[\mathbf{K}^{\text{G}}_{1};\cdots;\mathbf{K}^{\text{G}}_{j}], \widetilde{\mathbf{V}}_{\mathcal{T}_i}^{\text{G}}\small{=}[\mathbf{V}^{\text{G}}_{1};\cdots;\mathbf{V}^{\text{G}}_{j}], \\
        \mathbf{K}^{\text{G}}{=}{\bf C}{\bf W}^K,  \mathbf{V}^{\text{G}}{=}{\bf C}{\bf W}^V,
    \end{gathered}
\end{equation}
where ${\bf W}^K$ and ${\bf W}^V$ is learnable parameters.
In contrast, tokens in close proximity to the query $\mathbf{Q}_i$ are likely to be more relevant. We retain $s$ nearest tokens for a fine-grained attention computation (discussed in detail in Section~\ref{sec:local_attention}).
Thus, the index $j$ in Eqn.~(\ref{eq:global_kv}) can be calculated as 
${j\small{=}\max(0,\lfloor(i\small{-}s)/g\rfloor)}$.
When the context is short (\ie, $i \small{<} (g+s)$), the key and value $\mathbf{K}^{\text{G}}$, $\mathbf{V}^{\text{G}}$ would be excluded from attention calculation since the redundancy in the context is negligible.
During inference, as tokens are generated sequentially, we derive a new core token via Eqn.~(\ref{eq:pooling}) once the number of generated tokens reaches $g$.
Different from the full self-attention, we cache $\widetilde{\mathbf{K}}^{\text{G}}$ and $\widetilde{\mathbf{V}}^{\text{G}}$ for inference.

\subsection{Locality-preserving Module}\label{sec:local_attention}

As mentioned above, the globality-aware pooling module effectively captures long-range dependencies by compressing input tokens into core tokens. It focuses on coarse-grained global information, potentially overlooking fine-grained local context. However, recent studies~\cite{zong2021Long,yang2021context} demonstrate that local context plays a critical role in many language modeling tasks. To address this, we introduce a locality-preserving module that complements the globality-aware pooling module by focusing on neighboring tokens to capture detailed local dependencies.

To be specific, the locality-preserving module ensures that each query $\mathbf{Q}_i$ attends to the preceding at least $s$ tokens to capture local dependencies. During the generation process, it is challenging to maintain the number of tokens as a multiple of the group size $g$. To address this, we set the local window size to \( s \small{+} ((i-s) \bmod g) \). This strategy ensures that each key token participates in the attention computation with the query \( \mathbf{Q}_i \). Consequently, the key and value matrices for a specific query \( \mathbf{Q}_i \) in the locality-preserving module are defined as follows:
\begin{equation}\label{eq:local_K}
\begin{gathered}
\widetilde{\mathbf{K}}_{\mathcal{U}_i}^{\text{L}}\small{=}[\mathbf{K}^{\text{L}}_{k};\cdots;\mathbf{K}^{\text{L}}_{i}],~~\widetilde{\mathbf{V}}_{\mathcal{U}_i}^{\text{L}}\small{=}[\mathbf{V}^{\text{L}}_{k};\cdots;\mathbf{V}^{\text{L}}_{i}],\\
\mathbf{K}^{\text{L}}=\mathbf{X}\mathbf{W}^K, \mathbf{V}^{\text{L}}=\mathbf{X}\mathbf{W}^V
\end{gathered}
\end{equation}
where $k\small{=}\max(1,i\small{-}s\small{-}((i\small{-}s) \bmod g))$.
Note that the locality-preserving module shares the projection parameters ${\bf W}^Q$, ${\bf W}^K$, and ${\bf W}^V$ with the globality-aware pooling module, thereby incurring no additional projection parameters.

\subsection{Differentiable Fusion of Global and Local Modules} \label{sec:attention_fusion}

Both globality-aware pooling and locality-preserving modules involve only a portion of tokens in the attention computation, leading to a limited comprehensive understanding of the context.
To address this limitation, we seek to combine the involved tokens of these two attentions to integrate the insights they provide.
Specifically, we concatenate the key and value matrices from both attentions, \ie, $[{\widetilde{\mathbf{K}}_{\mathcal{T}_i}^{\text{G}}}; {\widetilde{\mathbf{K}}_{\mathcal{U}_i}^{\text{L}}}]$ and  $[\widetilde{\mathbf{V}}^{\text{G}}_{\mathcal{T}_i};\widetilde{\mathbf{V}}_{\mathcal{U}_i}^{\text{L}}]$, to leverage the combined information. Formally, the proposed \sexyname is computed as follows:
\begin{equation}\label{eqn: cca_att}    \mathbf{Att}_i=\text{softmax}\big( \frac{{\mathbf{Q}_i[{\widetilde{\mathbf{K}}_{\mathcal{T}_i}^{\text{G}}}; {\widetilde{\mathbf{K}}_{\mathcal{U}_i}^{\text{L}}}]^\top})}{\sqrt{d}} \big) [\widetilde{\mathbf{V}}_{\mathcal{T}_i}^{\text{G}};\widetilde{\mathbf{V}}_{\mathcal{U}_i}^{\text{L}}].
\end{equation}
We represent the final output of our \sexyname as $\mathbf{Att}=[\mathbf{Att}_1;\mathbf{Att}_2;\dots; \mathbf{Att}_L]$. In practice, we implement our attention mechanism with Triton~\cite{tillet2019triton} to accelerate both training and inference processes. This implementation enables parallel computation of each $\mathbf{Att}_i$ with high computational efficiency. After integrating the global and local attention, we can formalize $\mathbf{Att}$ in Eqn. (\ref{eqn: cca_att}) element-wise into the structure of full attention, as detailed in Proposition \ref{prop: reachability} in the supplementary material~\ref{supp:sec:theoretical_proofs}. The Proposition \ref{prop: reachability} shows that each token accesses all preceding tokens, ensuring full information exchange among tokens and thus enhancing capturing long-range dependencies.

More importantly, our \sexyname demonstrates dynamic flexibility through adjustable group size $g$ and local window size $s$ during inference. This architectural flexibility allows the generation of multiple model variants tailored to varying user traffic, offering a substantial advantage over the full self-attention mechanisms in real-world deployment scenarios (see results and analysis in Section~\ref{sec:inference_flexibility}).

\subsection{Training Strategies of CCA Transformers}

\revised{
Our proposed \sexyname is fully compatible with existing attention-based LLMs, such as the LLaMA series models~\cite{touvron2023llama2,meta2024llama3}, serving as a plug-and-play module that can replace the full self-attention. \sexyname maintains alignment with full self-attention in terms of input, output, and parameter dimensions.
This ensures that only a minimal training cost is able to preserve long-context modeling capabilities while reducing computational costs. In contrast, existing linear attention approaches~\cite{katharopoulos2020transformers,sun2023retentive} introduce kernel functions for attention and necessitate training from scratch, making them less practical for real-world applications due to their inability to leverage the extensive knowledge embedded in pretrained LLMs.
}

\revised{We replace the self-attention module in existing attention-based LLMs with \sexyname, enabling compatibility with three different training strategies:
1) Training from Scratch: This strategy involves training \sexyname from scratch on large-scale corpora. While this approach may yield superior performance by fully adapting the model to the proposed attention mechanism, it is computationally intensive and requires significant resources.
2) Full Finetuning: A more efficient alternative is to finetune all parameters of the model based on the parameters of existing pretrained LLMs. This strategy leverages the pre-trained knowledge while allowing the model to adapt fully to our attention mechanism.
3) Partial Finetuning: For scenarios where efficiency is critical, we can finetune only the learnable parameters of \sexyname, \ie, $\mathbf{W}^Q$, $\mathbf{W}^K$, and $\mathbf{W}^V$. This strategy requires only a modest finetuning effort on a small number of corpora, making it computationally efficient for maintaining long-context modeling capabilities.
We provide empirical evidence to guide the selection of the most appropriate finetuning strategy in Table~\ref{tab:update_strategy}.}

\begin{table*}[t]
\centering
\caption{\revised{Comparisons of different models on LongBench-E~\cite{bai2023longbench}. The length of 95\% of the test samples in LongBench-E is less than 31K. ``FTL'' denotes the latency to generate the first token in the pre-filling stage. ``Mem.'' denotes the memory footprint. ``S. QA'' means single document QA, while ``M. QA'' denotes multi-document QA. We report the latency and memory footprint of LLaMA2-7B-32K and LLaMA2-7B-80K within contexts of 32K and 64K on A800 GPUs, respectively.}}
\label{tab:longbench}
\resizebox{\textwidth}{!}{%
\begin{tabular}{l|cccccc|c|cc}
\hline
Methods & S. QA & M. QA & Sum. & FS. Learning & Synthetic & Code & ~Avg.~ & FTL (s) & Mem. (GB) \\ \hline

\textit{LLaMA2-7B-32K~\small{(Vanilla Self-Attention)}} & 2.75 & 1.85 & \textbf{12.43} & \textbf{66.28} & 0.34 & \underline{48.99} & \textbf{22.11} & 9.15  & 35.58 \\
StreamingLLM~\cite{xiao2024efficient} & \textbf{4.75} & 2.94 & 2.97 & 48.20 & \underline{0.66} & 30.16 & 14.95 & 5.75 (1.6$\times$) & 22.94 (35\%$\downarrow$)\\
LM-Infinite~\cite{han2023lm} & 2.04 & 2.33 & 1.98 & 57.45 & 0.3 & 48.46 & 18.76 & 4.72 (1.9$\times$) & 26.35 (26\%$\downarrow$)\\
MInference~\cite{jiangminference} & \underline{3.68} & \underline{3.05} & \underline{10.97} & \underline{66.26} & 0.61 & 42.30 & 21.14 & 4.20 (2.2$\times$) & 33.52 (6\%$\downarrow$) \\
\rowcolor{pink!25} \sexynamellm (Ours) & 3.63 & \textbf{3.98} & 7.79 & 61.79 & \textbf{2.64} & \textbf{51.36} & \underline{21.86} & \textbf{2.59 (3.5$\times$}) & \textbf{19.12 (46\%$\downarrow$)}\\ \hline
\textit{LLaMA2-7B-80K \small{(Vanilla Self-Attention)}} & \underline{3.22} & 2.71 & 3.90 & \textbf{64.98} & 0.56 & \textbf{59.16} & \textbf{22.42} & 32.43 & 60.03 
 \\
StreamingLLM~\cite{xiao2024efficient} & 2.07 & 2.32 & 0.37 & 45.03 & \textbf{2.67} & 37.17 & 14.94 & 9.04 (3.6$\times$) & 37.45 (37\%$\downarrow$) \\
LM-Infinite~\cite{han2023lm} & 2.54 & 1.53 & 2.22 & 61.29 & \underline{1.08} & \underline{58.54} & 21.20 & 8.27 (3.9$\times$) & 41.54 (31\%$\downarrow$) \\
MInference~\cite{jiangminference} & 2.44 & \underline{3.49} & \underline{4.41} & \underline{64.26} & 0.28 & 57.60 & 22.08 & 8.14 (4.0$\times$) & 54.09 (10\%$\downarrow$) \\
\rowcolor{pink!25} \sexynamellm (Ours) & \textbf{5.62} & \textbf{4.34} & \textbf{8.99} & 59.60 & 0.48 & 54.40 & \underline{22.24} & \textbf{6.42 (5.7$\times$)} & \textbf{33.86 (44\%$\downarrow$)} \\ \hline
\end{tabular}%
}
\end{table*}

\begin{table*}[t]
\centering
\caption{\revised{Comparisons of different methods using latest models on LongBench-E~\cite{bai2023longbench}. We report the latency and memory footprint of LLaMA3.1-8B-Instruct-128K (short for ``LLaMA3.1-8B-128K'' in the table) and Qwen2.5-7B-128K within contexts of 32K on A800 GPUs.}}
\label{tab:more_models}
\resizebox{\textwidth}{!}{%
\begin{tabular}{l|cccccc|c|cc}
\hline
Methods & S. QA & M. QA & Sum. & FS. Learning & Synthetic & Code & ~Avg.~ & FTL (s) & Mem. (GB) \\ \hline

\textit{LLaMA3.1-8B-128K~\small{(Vanilla Self-Attention)}} & 16.71 & \underline{10.75} & \underline{20.32} & \textbf{68.75} & \textbf{48.93} & \underline{62.10} & \textbf{37.93} & 9.55  & 40.38 \\
MInference~\cite{jiangminference} & \underline{16.33} & \underline{10.71} & \textbf{20.44} & \underline{68.41} & \underline{48.06} & \textbf{62.50} & 37.74 & 4.93 (1.9$\times$) & 35.95 (11\%$\downarrow$)\\
\rowcolor{pink!25} \sexynamellm (Ours) & \textbf{17.90 }& \textbf{16.41} & 19.63 & 67.20 & 43.76 & 61.98 & \underline{37.81} & \textbf{3.08 (3.1$\times$}) & \textbf{20.63 (49\%$\downarrow$)}\\ \hline
\textit{Qwen2.5-7B-128K \small{(Vanilla Self-Attention)}} & \underline{16.67} & \textbf{18.18} & \textbf{18.70} & 66.81 & \underline{45.34} & \textbf{64.56} & \textbf{38.38} & 10.58 & 35.11 \\
MInference~\cite{jiangminference} & 16.20 & \underline{17.21} & 18.59 & \textbf{67.10} & 38.28 & 62.95 & 36.72 & 4.86 (2.2$\times$) & 32.40 (8\%$\downarrow$) \\
\rowcolor{pink!25} \sexynamellm (Ours) & \textbf{16.91} & 17.07 & \underline{18.60} & \underline{66.89} & \textbf{45.50} & \underline{63.52} & \underline{38.08} & \textbf{2.74 (3.9$\times$)} & \textbf{19.31 (45\%$\downarrow$)} \\ \hline
\end{tabular}%
}
\end{table*}

\subsection{Computational and Storage Complexity Analysis}\label{sec:efficiency_analysis}

Compared with the full self-attention, our \sexyname offers significant benefits in terms of computational complexity and key-value cache storage, as analyzed below.

\textbf{Acceleration via Reduced Computational Complexity}. 
Our \sexyname exhibits varying computational complexities depending on the type of task.
For tasks with fixed-length sequences (such as multi-choice question answering), our \sexyname exhibits a linear computational complexity of $O(Lm+Ls)$, marking a significant enhancement over the full self-attention with a complexity of $O(L^2)$.
Here, we define the number of group $m$ as a constant.
For the globality-aware pooling module, the query and key matrices encompass $L$ and $m$ tokens, respectively, resulting in a computational complexity of $\mathcal{O}(Lm)$.
Regarding the locality-preserving module, each token only attends preceding $s+g$ tokens at most.
With $L$ tokens in total, the upper bound of complexity amounts to $O(L(s+g))$.

For tasks with variable-length sequences (such as open-ended question answering), models generate subsequent tokens in an autoregressive manner.
In this case, we set the group size $g$ as a constant, ensuring that our \sexyname is able to leverage key-value caching during autoregressive token generation.
Once one group has certain $g$ tokens, the corresponding core token is also determined and cached.
Thus, our \sexyname achieves a computational complexity of $O(L^2/g+Ls)$.
The complexity analysis follows a similar pattern to the tasks with fixed-length sequences.

\textbf{Acceleration through Reduced Key-Value (KV) Cache}. 
In attention-based LLMs, the KV cache leverages the autoregressive nature to store and reuse key-value pairs, thereby significantly boosting the efficiency. The size of the KV cache scales linearly with the length of the input sequence, consuming a major part of the memory footprint during inference.
The expanded KV cache would consume considerable memory and significant memory IO resources.
Compared with full attention's complexity of $O(L)$, our \sexyname has a storage complexity of $O(L/g+s)$.
For the globality-aware pooling module, we only retain the key and value matrices for core tokens, rather than for all original tokens. This reduces the memory requirement to $O(L/g)$.
Besides, the locality-preserving module only maintains the key and value matrices for the preceding $s$ tokens. The storage complexity for this component is $O(s)$.

\section{Experiments}

\subsection{Experimental Setup}

We apply our \sexyname and compared efficient attention methods to existing pretrained LLMs. We report the performance in long context modeling and computational efficiency.
We put more implementation details and ablation studies of our method in the supplementary materials.

\textbf{Dataset \& Evaluation Metrics}. We quantitatively evaluate our models and compare them with other considered models on the following benchmark and metric: 
1) LongBench~\cite{bai2023longbench} is a pioneering benchmark for the bilingual, multi-task, and comprehensive assessment of large language models' long context understanding capabilities. It covers multiple languages like Chinese and English, consisting of 6 major categories and 21 tasks involving various application areas.
2) Exact Match Score (EM Score)~\cite{liu2024lost} is a metric for measuring the model's ability to find the key information within a long context in a multi-document question-answering task. In this task, each test sample comprises a certain number of documents to reach the specified context length, followed by a question about the key information inserted in context. 

\begin{table*}[t]
\centering
\caption{\revised{Comparisons of different models on multi-document EM score~\cite{liu2024lost} evaluated at lengths from 4K to 128K.
``FTL'' denotes the latency to generate the first token in the pre-filling stage. We report the latency within a context of 128K on two A800 GPUs.}
}
\label{tab:ruler}
\resizebox{\textwidth}{!}{%
\begin{tabular}{l|cccccc|c|c}
\hline
Methods & ~~~~4K~~~~ & ~~~~8K~~~~ & ~~~~16K~~~~ & ~~~~32K~~~~ & ~~~~64K~~~~ & ~~~~128K~~~~ & ~~~~Avg.~~~~ & FTL (s) \\ \hline
\textit{LLaMA2-7B-80K~\small{(Vanilla Self-Attention)}} & \textbf{39.4} & \textbf{37.8} & \textbf{37.6} & \textbf{36.2} & \underline{34.6} & \underline{30.3} & \textbf{36.0} & 124.85 \\
StreamingLLM~\cite{xiao2024efficient} & 33.6 & 26.0 & 32.2 & 30.6 & 27.4 & 25.1 & 29.2 & 34.74~(3.6$\times$) \\
LM-Infinite~\cite{han2023lm} & 31.6 & 25.6 & 32.4 & 32.2 & 28.2 & 26.3 & 29.4 & 32.57~(3.8$\times$) \\
MInference~\cite{jiangminference} & 39.0 & 32.4 & \underline{37.4} & \underline{36.0} & 32.3 & 28.9 & 34.3 & 20.18~(6.2$\times$) \\
\rowcolor{pink!25} \sexynamellm (Ours) & \underline{39.3} & \underline{33.2} & 35.4 & 31.4 & \textbf{35.3} & \textbf{32.0} & \underline{34.4} & \textbf{15.89 (7.9$\times$)} \\ \hline
\end{tabular}%
}
\end{table*}

\textbf{Implementation Details}\footnote{The source code for this project is publicly available at \href{https://github.com/chenyaofo/CCA-Attention}{https://github.com/chenyaofo/CCA-Attention}.}.
We apply our proposed \sexyname to LLaMA2-7B-32K, LLaMA2-7B-80K~\cite{fu2024data}, LLaMA3.1-8B-128K~\cite{meta2024llama3} and Qwen2.5-7B-128K~\cite{yang2024qwen25} models.
We replace the full self-attention in the above LLMs with our proposed \sexyname.
In the continuous finetuning, we adopt the SlimPajama~\cite{cerebras2023slimpajama} dataset, an open-source replication of the LLaMA pretraining data mixture.
The number of groups in globality-aware Attention is shared across different model sizes. We \yz{finetune the full} model on A800 GPUs using a micro-batch size of 1 and a gradient accumulation of 8, with a total of 1000 training steps. This training configuration is applicable to all model sizes and context lengths. To scale the models to long contexts, we modified the ``base frequency'' in RoPE from 10,000 to 500,000, following~\cite{cerebras2023slimpajama,xiong2023effective}. See Section~\ref{supp:sec:exp_proctocols} for more implementation details.

\textbf{Compared Methods}. We conduct comprehensive comparisons between our proposed \sexyname and several state-of-the-art methods, including LLaMA-2 with vanilla attention, StreamingLLM~\cite{xiao2024efficient}, LM-infinite~\cite{han2023lm}, and MInference~\cite{jiangminference}, across the LongBench~\cite{bai2023longbench} and RULER~\cite{hsieh2024ruler} benchmarks. Our experiments are based on the LLaMA-2 7B model fine-tuned on sequences of length 32K and 80K~\cite{fu2024data}. 
For StreamingLLM~\cite{xiao2024efficient}, we use the official implementation, adjusting the attention sink to 4 and setting the attention context size to 2000. Similarly, for LM-infinite~\cite{han2023lm}, we follow the official code, configuring the local branch size to 1024 and the global branch size to 16. In the case of MInference~\cite{jiangminference}, we also employ the official code implementations, configured with the official settings.

\subsection{Comparisons on Long Context Modeling}

\noindent\textbf{Comparisons on Longbench-E}.
\label{sec:com_longbench}
We conduct experiments on Longbench-E~\cite{bai2023longbench} using our \sexyname and baseline methods, including  StreamingLLM~\cite{xiao2024efficient}, LM-Infinite~\cite{han2023lm}, and MInference~\cite{jiangminference}. As shown in Table~\ref{tab:longbench}, our \sexynamellm attains the highest average score on Longbench-E, outperforming other efficient attention methods. For LLaMA-7B-32K, the average score of our \sexynamellm is higher than that of LM-Infinite
(21.86 \textit{vs.}~18.76) and MInference (21.86 \textit{vs.}~21.14). 
For the LLaMA-7B-80K model, our method consistently shows superior performance compared to alternative approaches.
For instance, our \sexynamellm yields a higher EM score than LM-Infinite
(22.24 \textit{vs.}~21.20) and MInference (22.24 \textit{vs.}~22.08). 
This performance advantage primarily stems from our global-aware pooling module, which effectively reduces context redundancy in long input sequences. Consequently, our \sexynamellm model demonstrates enhanced capability in identifying and focusing on core contextual elements, thereby facilitating more accurate extraction of crucial information required for question-answering tasks within the Longbench-E benchmark.
Notably, our \sexynamellm achieves the lowest inference latency and KV cache usage among all compared methods. ur \sexynamellm is able to reduce the KV cache usage while the best counterpart MInferencce only accerlerate the prefilling stage and does not reduce KV cache storage.

To further validate the effectiveness of our method across different model architectures, we conduct extensive experiments on more recent foundation models: LLaMA3.1-8B-Instruct-128K and Qwen2.5-7B-128K. As shown in Table~\ref{tab:more_models}. our \sexyname demonstrates superior performance over the state-of-the-art baseline MInference across three key metrics: computational efficiency, memory reduction, and model performance. The consistent improvements across different model architectures suggest that our approach is not limited to specific model designs.

\begin{figure}[t]
  \centering
  \includegraphics[width=0.95\linewidth]{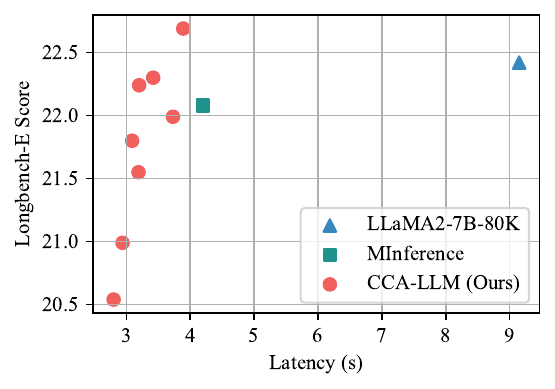}
  \vspace{-15pt}
  \caption{Illustration of inference flexibility by adjusting the group size $g$ and local window size $s$ to generate various \sexynamellm models with different latency and accuracies in the test time.
  This architectural flexibility allows for precise control over the trade-off between inference latency and accuracy, particularly beneficial for real-world applications with varying user traffic patterns.
  }
  \label{fig:latency_accuracy}
  \vspace{-5pt}
\end{figure}

\begin{figure*}[t]
  \centering
  \includegraphics[width=\linewidth]{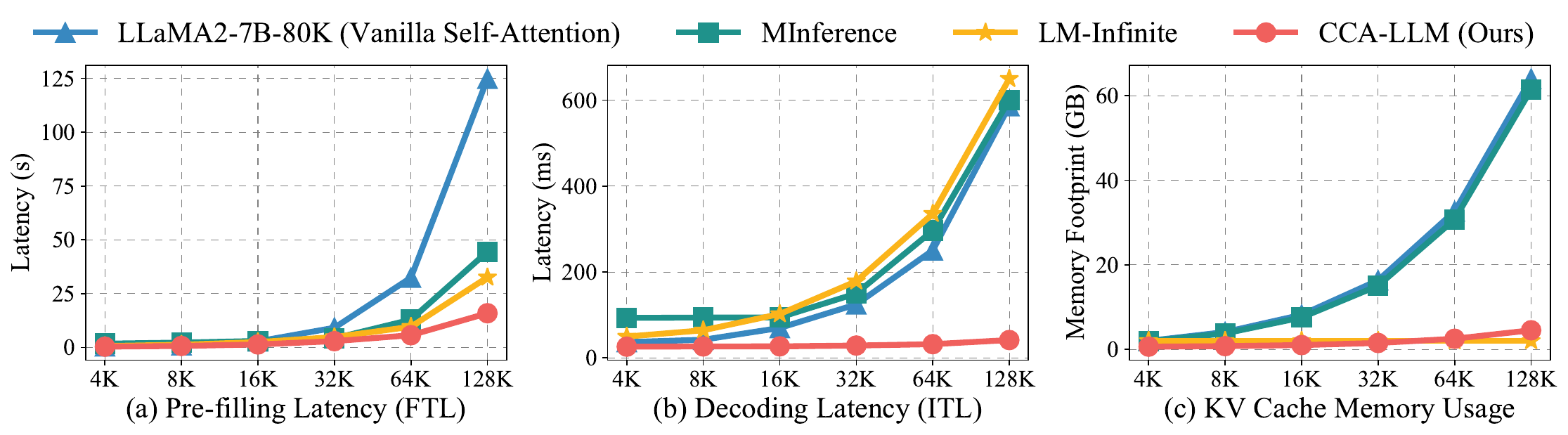}
  \vspace{-10pt}
  \caption{
  Comparisons with state-of-the-art methods in terms of both computational and storage overhead on LLaMA2-7B-80K.
  ``FTL'' (first token latency) is the time taken to generate the first token after receiving the input in the pre-filling stage. ``ITL'' (inter token latency) is the time delay between generating consecutive tokens (except for the first token) during the decoding stage.
  }
  \label{fig:comparions_latency_memory}
  \vspace{-5pt}
\end{figure*}

\textbf{Comparisons on Long-document QA}.
\label{sec:com_em}
We evaluate the performance of our \sexyname against other methods, including StreamingLLM~\cite{xiao2024efficient}, LM-Infinite~\cite{han2023lm}, and MInference~\cite{jiangminference}, on LLaMA2-7B-80K models using the multi-document EM Score metric. As shown in Table~\ref{tab:ruler}, we conduct comparisons across a range of context lengths: 4K, 8K, 16K, 32K, 64K, and 128K.
For the short context (\eg, 4K), our \sexynamellm consistently achieves the highest EM score across all methods, showcasing significant capability for short sequence modeling. 
For example, our \sexynamellm outperforms StreamLLM (39.3 \textit{vs.}~33.6) and MInference (39.3 \textit{vs.}~39.0) in terms of EM score.
The reason is that our \sexyname captures context dependencies without discarding crucial tokens. In contrast, StreamingLLM~\cite{xiao2024efficient} prioritizes attention on the initial and final tokens, effectively discarding intermediate tokens, which may contain essential information. Similarly, MInference~\cite{jiangminference} employs predefined sparse attention patterns, selectively attending to tokens and potentially overlooking critical parts of the input sequence. Both approaches risk losing important contextual information, leading to suboptimal performance in tasks requiring comprehensive understanding. Our method, by preserving both local and global contexts, ensures that no critical information is overlooked, thereby achieving superior performance.

For the extremely long context (\eg, 64K and 128K), Our \sexynamellm shows much better performance than vanilla self-attention in terms of EM score (35.3 \textit{vs.} 34.6) and 7.9x inference speedup with a context length of 128K. The advantages of our method become more prominent as the length of the context increases, while the performance of vanilla self-attention may even decrease when the context length becomes very large.
The reason is that in an extremely long context, non-core contexts (\ie, the irrelevant context) will be compressed by the proposed weighted pooling. In this way, \sexynamellm alleviates the redundant context issue and improves the long-context modeling performance.

\subsection{Demonstration of Inference Flexibility}\label{sec:inference_flexibility}

In real-world applications, user traffic exhibits diurnal variations. During peak traffic periods, the full self-attention struggles with throughput limitations, necessitating additional servers to accommodate the increased demand.
Instead, our \sexynamellm can dynamically adjust the group size \( g \) and the local window size \( s \) during inference to improve throughput.
During off-peak traffic periods, our \sexynamellm is able to enhance model performance, albeit with a minor compromise in throughput. This strategy exhibits remarkable elasticity to address the variable user traffic challenges.

To verify this, we conduct experiments with different group sizes \( g \) and local window sizes \( s \) during inference.
In Figure~\ref{fig:latency_accuracy}, our \sexynamellm demonstrates an improved Longbench-E Score with a concomitant increase in computational overhead as \( g \) decreases.
With a reduction in the group size \( g \) from 16 to 2 and the local window size \( s \) from 4096 to 1024, the Longbench-E Score escalates from 20.54 to 22.69, while the latency rises from 2.80 seconds to 3.89 seconds.
Despite training \sexynamellm only once, we obtain a spectrum of models, each with distinct performance and computational demands.
\yz{This flexibility comes from two complementary modules: The globality-aware pooling module dynamically compress core tokens based on semantic relevance via intra-group attention, enabling adaptation to different group sizes. The locality-preserving module uses rotary embeddings to encode relative positions, achieving translation invariance and maintaining local context across scales.}

\subsection{Comparisons on Computational Efficiency}

We compare our \sexynamellm with LLaMA2-7B-80K equipped with full self-attention, LM-Infinite~\citep{han2023lm}, and MInference~\cite{jiangminference} in terms of inference latency and memory footprint during forward-propagation on a single NVIDIA A800 GPU.
The efficiency performance was assessed across a range of input sequence lengths, \ie, $\{$4K, 8K, 16K, 32K, 64K, 128K$\}$.
In Figure~\ref{fig:comparions_latency_memory}, our \sexyname achieves a 7.9$\times$ inference speedup than LLaMA2-7B-80K with the full self-attention (15.89s \textit{vs.}~124.85s in 128K context) MInference (15.89s \textit{vs.}~44.50s in 128K context) in the pre-filling stage.
Our \sexyname also exhibits a reduced KV cache memory usage than LLaMA2-7B-80K (4.5GB \textit{vs.}~64GB in 128K context) and MInference (4.5GB \textit{vs.}~64GB in 128K context).
Note that Minference~\cite{longlora2023chen} only accelerates the pre-filling stage and adopts full self-attention in the decoding stage.
Moreover, it does not reduce KV cache, leading to the same memory usage as the full self-attention.
Conversely, our \sexynamellm is able to accelerate both pre-filling and decoding stages with reduced KV cache, which is more practical in real-world applications.

\section{Conclusion}

In this paper, we proposed a Core Context Aware Attention (\sexyname) for long-context language modeling with reduced computational overhead compared with vanilla self-attention.
Our \sexyname includes two components: 1) globality-aware pooling module exploits the importance of input tokens to encapsulates core tokens and employs them for attention, capturing global coarse-grained information;
2) The locality-preserving module focuses on neighboring tokens to capture local fined-grained context, serving as a complement for the global module.
Our proposed attention is able to replace the full self-attention in existing LLMs with a minimal finetuning efforts.
Experimental results show the effectiveness of our \sexyname with promising performance and decreased computational cost.

\vspace{-5pt}
\section*{Acknowledgments}
This work was partially supported by the Joint Funds of the National Natural Science Foundation of China (Grant No.U24A20327), Key-Area Research and Development Program Guangdong Province 2018B010107001,
the Major Key Project of Peng Cheng Laboratory (PCL) PCL2023A08, 
Postdoctoral Fellowship Program of CPSF (GZC20251043),
and TCL Science and Technology Innovation Fund, China.

\vspace{-5pt}
\section*{Impact Statement}
\yz{Our work addresses the critical challenge of computational inefficiency in long-context modeling for large language models. By introducing a core context aware attention mechanism, we achieve a reduction in attention complexity to linear time while preserving essential semantic interactions. This enables LLMs to process contexts up to 128K tokens with a 7.9$\times$ speedup over full self-attention, without compromising accuracy. This promotes a more environmentally friendly and sustainable approach to AI development, aligning with the growing demand for more energy-efficient AI.}

\bibliography{example_paper}
\bibliographystyle{icml2025}

\newpage
\clearpage

\onecolumn
\appendix



\begin{leftline}
	{
		\LARGE{\textsc{Supplementary Materials}}
	}
\end{leftline}

\etocdepthtag.toc{mtappendix}
\etocsettagdepth{mtchapter}{none}
\etocsettagdepth{mtappendix}{subsection}

{
    \hypersetup{linkcolor=black}
        \footnotesize\tableofcontents
}
\newpage

\section{Theoretical Analysis on Reachability for \sexyname}\label{supp:sec:theoretical_proofs}

In the self-attention, the calculation can be formulated as $\text{Attention}(\mathbf{Q}, \mathbf{K}, \mathbf{V}) = \text{softmax}\left(\mathbf{Q}\mathbf{K}^{\top}/{\sqrt{d}}\right)\mathbf{V}$, where $\mathbf{Q}\small{=} \mathbf{X}\mathbf{W}^Q$, $\mathbf{K} \small{=} \mathbf{X}\mathbf{W}^K$, and $\mathbf{V}\small{=} \mathbf{X}\mathbf{W}^V$, $\mathbf{W}^Q$, $\mathbf{W}^K$, $\mathbf{W}^V$ are learable parameters.
For convenience in analyzing the attention mechanism, we denote the attention weight as $\mathbf{A}=\text{softmax}({\mathbf{Q}\mathbf{K}^{\top}}/{\sqrt{d}})$, where the element in $\mathbf{A}$ is represented as $a_{ij}$.
We give the definition of \textit{reachability}, and show that our method is to satisfy the information among all tokens is reachability, and then give the concrete expression of attention output.

\begin{deftn}
\emph{(\textbf{Reachability)}}
    We say the token $j$ is reachable from the token $i$ in the attention map if and only if the attention weight from the token $j$ to $i$ is positive, \ie, $a_{ij}>0$.
\end{deftn}

\begin{prop}
\label{prop: reachability}
    The attention score with causal masking in the \sexyname mechanism fully satisfies the reachability condition from the earlier tokens to the later tokens in the sequence at each transformer layer. Moreover, the final output representation $\mathbf{o} \in \mathbb{R}^d$ for $i$-th token in Eqn. (\ref{eqn: cca_att}) can be given by
    \begin{align}
    \label{eqn: oj_total}
        o_j{=} \!\begin{cases}
        \sum_{q=1}^i  {\bf A}^{\mathrm{L}}_{i,q} {{\bf V}}_{q,j},  ~~~~~~~~~~~~~~~~~~~~~~~~~~~~~~~~~~~~~~~~~~~~~~~~~~~~~~~~~~~~~~~~~~~~~~~~~~~~~~~~~~~~~~~~~~~~~~~~~~~~~~~~~&\mathrm{if}~i{\leq}g; \\
        \sum_{p=1}^m  \mathbf{A}^G_{i,p} \sum_{q=1}^g \Phi_{p,q} \mathbf{V}_{g(p-1)+q,j}+ \sum_{t=1}^w \mathbf{A}^L_{i,t} \mathbf{V}_{mg+t,j}, ~w{=}s{+}(i-s) \bmod g,\!\!\!\!\!&\mathrm{else},
    \end{cases}
    \end{align}    
    where $o_j$ is the $j$-th element of the output $\mathbf{o}$, $\Phi {\in} \mathbb{R}^{m\times g}$ is the weight of all core tokens in Eqn. (\ref{eq:pooling}), $\mathbf{A}$ denote the attention score of  \sexyname in Eqn. (\ref{eqn: cca_att}).
\end{prop}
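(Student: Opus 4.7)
The plan is to establish the two claims separately. For reachability, I would argue by tracking which original token indices receive positive effective weight from the $i$-th query, doing a case split on whether $i \le g$ (the pure local regime) or $i > g$ (the mixed regime). For the output formula, I would expand Eqn.~(\ref{eqn: cca_att}) by substituting the definitions of $\mathbf{K}^{\mathrm{G}}, \mathbf{V}^{\mathrm{G}}$ from Eqn.~(\ref{eq:global_kv}), of $\mathbf{K}^{\mathrm{L}}, \mathbf{V}^{\mathrm{L}}$ from Eqn.~(\ref{eq:local_K}), and of each core token $\mathbf{c}_p$ from Eqn.~(\ref{eq:pooling}), then extract the $j$-th coordinate.

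For reachability, write $\mathbf{Att}_i$ as a convex combination $\sum_p \mathbf{A}^{\mathrm{G}}_{i,p}\widetilde{\mathbf{V}}^{\mathrm{G}}_p + \sum_t \mathbf{A}^{\mathrm{L}}_{i,t}\widetilde{\mathbf{V}}^{\mathrm{L}}_t$, noting that all softmax weights are strictly positive. When $i \le g$ the global branch is suppressed and the local window reduces to indices $1,\dots,i$, so every previous token $r \le i$ has positive weight $\mathbf{A}^{\mathrm{L}}_{i,r}$. When $i > g$, I would verify the tiling identity: with $w = s + (i-s)\bmod g$ and the number of covered groups $m = \lfloor (i-s)/g \rfloor$, the local start index is $k = i - w = mg$, so the local module reaches tokens $mg{+}1,\dots,i$, while each global core token $\mathbf{c}_p$ is itself a convex combination (via the strictly positive $\Phi_{p,q}$ from the intra-group softmax) of the $g$ tokens of group $p$. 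Composing the two softmaxes, the effective weight from token $i$ back to original token $g(p-1)+q$ is $\mathbf{A}^{\mathrm{G}}_{i,p}\,\Phi_{p,q} > 0$ for $1 \le p \le m$, $1 \le q \le g$, so every index $1,\dots,mg$ is reached through the global branch and $mg{+}1,\dots,i$ through the local branch, establishing full reachability.

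For the explicit output formula, I would plug $\widetilde{\mathbf{V}}^{\mathrm{G}}_p = \mathbf{c}_p \mathbf{W}^V = \sum_{q=1}^g \Phi_{p,q} \mathbf{X}^{\mathrm{G}}_{\mathcal{I}_p,q}\mathbf{W}^V = \sum_{q=1}^g \Phi_{p,q} \mathbf{V}_{g(p-1)+q}$ into the global sum and $\widetilde{\mathbf{V}}^{\mathrm{L}}_t = \mathbf{V}_{mg+t}$ into the local sum, yielding
\begin{equation*}
\mathbf{Att}_i = \sum_{p=1}^m \mathbf{A}^{\mathrm{G}}_{i,p} \sum_{q=1}^g \Phi_{p,q}\,\mathbf{V}_{g(p-1)+q} + \sum_{t=1}^w \mathbf{A}^{\mathrm{L}}_{i,t}\,\mathbf{V}_{mg+t}.
\end{equation*}
Reading off the $j$-th coordinate gives the second branch of Eqn.~(\ref{eqn: oj_total}), while the $i \le g$ branch follows by the same substitution with the global term absent.

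The main obstacle I expect is bookkeeping rather than insight: one has to carefully reconcile the two notational conventions for $m$ (globally $\lfloor L/g\rfloor$ in Algorithm~\ref{alg:attention}, but per-query $\lfloor (i-s)/g\rfloor$ in the proposition) and verify the boundary identities $k = mg$ and $mg + w = i$ so that the global-index range $\{1,\dots,mg\}$ and the local-index range $\{mg{+}1,\dots,i\}$ partition $\{1,\dots,i\}$ with no gap and no overlap. Once these indexing checks are nailed down, the rest is a direct expansion of two nested softmaxes, and the positivity of both softmaxes immediately delivers reachability.
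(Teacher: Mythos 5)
Your proposal is correct and follows essentially the same route as the paper's proof: expand Eqn.~(\ref{eqn: cca_att}) by substituting the core-token definition from Eqn.~(\ref{eq:pooling}) so that the composed weights $\mathbf{A}^{\mathrm{G}}_{i,p}\Phi_{p,q}$ reshape the sparse attention into a full-attention row, from which strict positivity of the nested softmaxes gives reachability and reading off coordinates gives Eqn.~(\ref{eqn: oj_total}). Your explicit verification of the tiling identities $k=mg$ and $mg+w=i$ (with the per-query reading of $m$ as $\lfloor(i-s)/g\rfloor$) is a welcome extra check that the paper leaves implicit.
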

\begin{proof}
We decompose the $i$-th row of the attention scores $\mathbf{A}$ into two parts:
\begin{equation}
    \mathbf{A}_i=\left[\mathbf{A}^{G}_i, \mathbf{A}^L_i\right], 
\end{equation}
where $\mathbf{A}^{G}_i \in \mathbb{R} ^ m$ and $\mathbf{A}^{L}_i \in \mathbb{R} ^ w$ with $w{=}s{+}(i-s) \bmod g$. We aim to use these two terms to formalize $\mathbf{A}_i$ element by element into the structure of full attention. For simplicity, we expand each element in these two terms with the weight $\Phi$ although the  $\Phi$ is not directly weighted on the attention: 
\begin{equation}\label{eqn: oj_{i>k}}
\mathbf{A}_i{=}\left(\underbrace{\Phi_{1,1}{{\bf A}^{G}_{i,1}},\ldots, \Phi_{1,g}{{\bf A}^{G}_{i,1}}}_g,\ldots, \underbrace{\Phi_{m,1}{{\bf A}^{G}_{i,m}},\ldots, \Phi_{m,g}{{\bf A}^{G}_{i,m}}}_g, \underbrace{{\bf A}^{L}_{i,1}, \ldots, {\bf A}^{L}_{i,w}}_{w} \right), i{=}1,\ldots, L.
\end{equation} 
Based on the property of the attention scores of ${\bf A}^{G}$, for $\forall i>j$, we have  $a_{ij}>0$, satisfying the condition of \textit{reachability}. Next, we drive the output representation $\mathbf{o}$ of a token.

When $i\leq g$, for each $o_j$ in $\mathbf{o}$, we can use both the attention score of the locality-preserving module and the globality-aware pooling module to obtain 
\begin{equation}
    o_j= \sum_{q=1}^i  {\bf A}^{\mathrm{L}}_{i,q} {{\bf V}}_{q,j}.
\end{equation}

When $i>g$, for each $o_j$ in $\mathbf{o}$, we can use the attention score of the locality-preserving module to obtain 
\begin{equation}
\label{eqn: oj_{i<k}}
    o_j= \sum_{p=1}^m  \mathbf{A}^G_{i,p} \sum_{q=1}^g \Phi_{p,q} \mathbf{V}_{g(p-1)+q,j}+ \sum_{t=1}^w \mathbf{A}^L_{i,t} \mathbf{V}_{mg+t,j}
\end{equation}
Taking  Eqn. (\ref{eqn: oj_{i>k}})  and Eqn. (\ref{eqn: oj_{i<k}})   together, we obtain the results.
\end{proof}

\newpage

\section{More Implementation Details}\label{supp:sec:implementation_details}

\subsection{More Details on Dataset and Evaluation Metrics}

\textbf{SlimPajama~\cite{cerebras2023slimpajama}} dataset is an open-source reproduction of the data mixture used to pretrain the LLaMA models. It consists of 82\% web data, 4.5\% code from Github, 4.5\% Wikipedia, 4.5\% books, 2.5\% Arxiv, and 2.0\% StackExchange, used for extending the context lengths of LLMs to 128K tokens through careful data engineering techniques like per-source length upsampling.
We use the SlimPajama dataset \cite{cerebras2023slimpajama} as our training dataset.

\textbf{LongBench~\cite{bai2023longbench}} is a pioneering benchmark designed for the bilingual, multitask, and comprehensive assessment of the long context understanding capabilities within LLMs. It encompasses diverse languages, specifically Chinese and English, thereby facilitating a more exhaustive evaluation of the multilingual proficiencies of large models in long context scenarios. Moreover, LongBench is structured with 6 major categories and 21 distinct tasks, spanning crucial long-text application areas such as single-document QA, multi-document QA, summarization, few-shot learning, synthetic tasks, and code completion.
LongBench has 14 English tasks, 5 Chinese tasks, and 2 code tasks. The average length of the majority of tasks falls within the range of 5k to 15k, and it comprises a total of 4,750 test data. For detailed statistical information and construction methodologies of LongBench tasks, reference can be made to the designated source. Additionally, LongBench-E is a test set featuring a more evenly distributed length constructed through uniform sampling. It contains comparable data quantities in the 0-4K, 4K-8K, and 8K+ length intervals, enabling an in-depth analysis of the model's performance fluctuations across different input lengths. We conduct the experiments on LongBench-E to verify the long context understanding capability of models in Section~\ref{sec:com_longbench}.

\textbf{Exact Match Score (EM Score)~\cite{liu2024lost}} measures the model's ability to find the key information within a long context in a multi-document question-answering task. In this task, each test sample comprises a certain number of documents to reach the specified context length (20 for 4K, 48 for 8K, 96 for 16K, 190 for 32K, 378 for 64K, 755 for 128K), followed by a question. 
We evaluate EM score metric with the multi-document question-answering dataset in Lost in the Middle~\cite{liu2024lost}, which is collected from NaturalQuestions-Open and Wikipedia. We use the exact match score as the evaluation metric, judging whether any of the correct answers appear in the predicted output in Section~\ref{sec:com_em}.

\textbf{Massive Multitask Language Understanding (MMLU)~\cite{hendrycks2021measuring}} dataset is designed to assess the capabilities of language models across a wide array of subjects, delving deeper into their academic and professional understanding. The MMLU benchmark spans 57 diverse subjects, ranging from elementary mathematics to professional law. The questions are designed to test both world knowledge and problem-solving abilities, challenging models with content from elementary to advanced professional levels.
We use the MMLU metric to evaluate the model's proficiency across a diverse set of language-understanding tasks. It tests the model's ability to apply its knowledge to a broad spectrum of topics and question types, reflecting its generalization capability in real-world scenarios.
The MMLU metric~\cite{hendrycks2021measuring}, which tests world knowledge and problem-solving abilities in zero-shot and few-shot settings, is evaluated using the MMLU dataset~\cite{hendrycks2021measuring}. This dataset spans 57 subjects across disciplines such as STEM, humanities, and social sciences. We test the MMLU metric in a 5-shot setting with MMLU dataset \cite{hendrycks2021measuring} to verify the commonsense generalization ability of models in the supplementary materials~\ref{supp:sec:more_mmlu}.

\textbf{Perplexity (PPL)} quantifies how effectively a model can predict the context. It is calculated as the exponentiated average negative log-likelihood of a sequence, offering a statistical measure of language modeling performance.
\textbf{Proof-pile~\cite{proofpile}} is a 13GB high-quality dataset of mathematical text and code that comprises 8.3 billion tokens (measured by the gpt-neox tokenizer).
The dataset is composed of diverse sources of both informal and formal mathematics and the raw data are downloaded from the web. We report PPL on the test dataset.
We use the test dataset of Proof-pile to verify long-context language modeling ability of models in the supplementary materials~\ref{supp:sec:more_ablation}.

\subsection{More Experimental Protocols}~\label{supp:sec:exp_proctocols}

\textbf{\sexyname (Ours).}
For the continuous pretraining, we adopt the SlimPajama~\cite{cerebras2023slimpajama} dataset, an open-source replication of the LLaMA pretraining data mixture.
This dataset comprises 82\% web data, split between 67\% from CommonCrawl and 15\% from C4, alongside 4.5\% from GitHub code, 4.5\% from Wikipedia, 4.5\% from books, 2.5\% from Arxiv, and 2.0\% from Stack Exchange.
We replace the full self-attention in LLaMA2 with our proposed \sexyname. The number of groups in globality-aware attention is shared across different model sizes. Training is conducted on 8 $\times$ A800 GPUs using a micro-batch size of 1 and a gradient accumulation of 8, with a total of 1000 training steps.
This training configuration is applicable to all model sizes and context lengths. Our method requires finetuning on a modest number of tokens to extend the long-context capabilities of LLMs, enabling efficient attention computation. Specifically, we require only 2.10 billion tokens for 32K and 5 billion tokens for 80K, which is significantly lower than the token requirements for retraining a large language model.

To scale the models to long contexts, we modified the ``base frequency'' in RoPE from 10,000 to 500,000, following~\cite{cerebras2023slimpajama,xiong2023effective}. 
In the globality-aware attention, we set the position embedding of $\mathbf{K}^{global}$ to the position embedding of the token at the middle position in the corresponding group, ensuring that our attention maintains positional awareness.
Following FlashAttention~\cite{dao2024flashattention}, we implement our \sexyname by leveraging Triton~\cite{tillet2019triton} to perform low-level operator fusion between our globality-aware pooling and locality-preserving modules.
This enables us to integrate our \sexyname as a standalone, cache-friendly operator, effectively eliminating redundant computations.

\textbf{Compared Methods}. We conduct comprehensive comparisons between our proposed \sexyname and several state-of-the-art methods, including LLaMA-2 with vanilla attention, StreamingLLM~\cite{xiao2024efficient}, LM-infinite~\cite{han2023lm}, and MInference~\cite{jiangminference}, across the LongBench~\cite{bai2023longbench} and RULER~\cite{hsieh2024ruler} benchmarks. Our experiments are based on the LLaMA-2 7B model fine-tuned on sequences of length 32K and 80K~\cite{fu2024data}.

For StreamingLLM~\cite{xiao2024efficient}, we use the official implementation, adjusting the attention sink to 4 and setting the attention context size to 2000. Similarly, for LM-infinite~\cite{han2023lm}, we follow the official code, configuring the local branch size to 1024 and the global branch size to 16. In the case of MInference~\cite{jiangminference}, we also employ the official code implementations, configured with the official settings.

\clearpage
\section{More Experimental Results}\label{supp:sec:more_results}

\subsection{Experiments on More Long Context Benchmarks}
We further evaluated the long-context modeling performance of our proposed method on RULER~\citep{hsieh2024ruler} using the LLaMA2-7B-80K model across various context lengths (ranging from 8K to 64K). As shown in Table~\ref{tab:ruler}, our approach consistently outperforms MInference~\citep{jiangminference} at all context lengths, demonstrating its superiority in context modeling. The performance gains primarily stem from two key innovations in our method: 1) Globality-aware Pooling Module dynamiclly identifies and pools the task-relevant context into core tokens, which effectively reduces redundancy while preserving essential information; 2) A locality-preserving module that supplements local information and ensures comprehensive information interaction across all context segments, as opposed to simply discarding tokens. 

\begin{table}[h]
\centering
\caption{Comparisons on RULER across 8-64K context.}\label{exp:ruler}
\begin{tabular}{l|cccc|c} 
\hline
\multicolumn{1}{c|}{Methods} & 8K    & 16K   & 32K   & 64K   & Avg.↑  \\ 
\hline
LLaMA2-7B-80K (\textit{Vanilla Self-Attention})                  & \textbf{71.90} & \underline{66.26} & \textbf{61.54} & \textbf{55.15} & \textbf{63.71}  \\
MInference~\citep{jiangminference}                   & 67.78 & 65.32 & \underline{61.43} & 52.77 & 61.83  \\
CCA-LLM (Ours)                      & \underline{68.15} & \textbf{66.31} & 60.89 & \underline{54.88} & \underline{62.56}  \\
\hline
\end{tabular}
\end{table}

\subsection{Comparisons with More Efficient Attention Methods}
\yz{To further evaluate our method, we compare it with LongLoRA~\citep{longlora2023chen}, a recently proposed training-based approach with PI techniques. As shown in Table~\ref{tab:longlora}, on LongBench-E, our CCA-LLM achieves superior performance in terms of modeling accuracy, inference speed, and memory efficiency. Notably, while S2-Attention proposed by LongLoRA is only available during training, it defaults to full self-attention during inference, resulting in comparable computational and memory overhead to standard attention mechanisms. In contrast, CCA-LLM achieves a first-token latency reduction of 3.5× and a 46\% decrease in memory usage, demonstrating its effectiveness for efficient long-context modeling.}

\begin{table}[h]
\centering
\caption{Comparisons on LongBench-E. ``FTL'' denotes the latency to generate the first token.}\label{tab:longlora}
\begin{tabular}{l|c|cc} 
\hline
Model & Avg. Score $\uparrow$ & FTL $\downarrow$ (s) & Memory. $\downarrow$ (GB)\\
\hline
LLaMA2-7B-32K & 22.11 & 9.15 & 35.58\\
~~$\bullet$~LongLoRA & 21.58 & 9.15 & 35.58 (0\%$\downarrow$)\\
~~$\bullet$~CCA-LLM (Ours) & 21.86 & 2.59 & 19.12 (46\%$\downarrow$) \\
\hline
\end{tabular}
\end{table}

\subsection{More Comparisons on MMLU with Multi-choice QA}
\label{supp:sec:more_mmlu}
When applied to tasks with a fixed input length, such as multi-choice QA tasks, we set the number of groups \( m \) as a constant value. This ensures that the overall computational complexity of our method is \( O(L) \), where \( L \) represents the input sequence length. In this section, we compare the performance of our method with full self-attention on the MMLU dataset specifically for multi-choice QA tasks. As shown in Table~\ref{tab:multi_choice}, our method consistently achieves superior performance than the traditional self-attention method, demonstrating the effectiveness of our approach.

\begin{figure}[t]
  \centering
  \includegraphics[width=\linewidth]{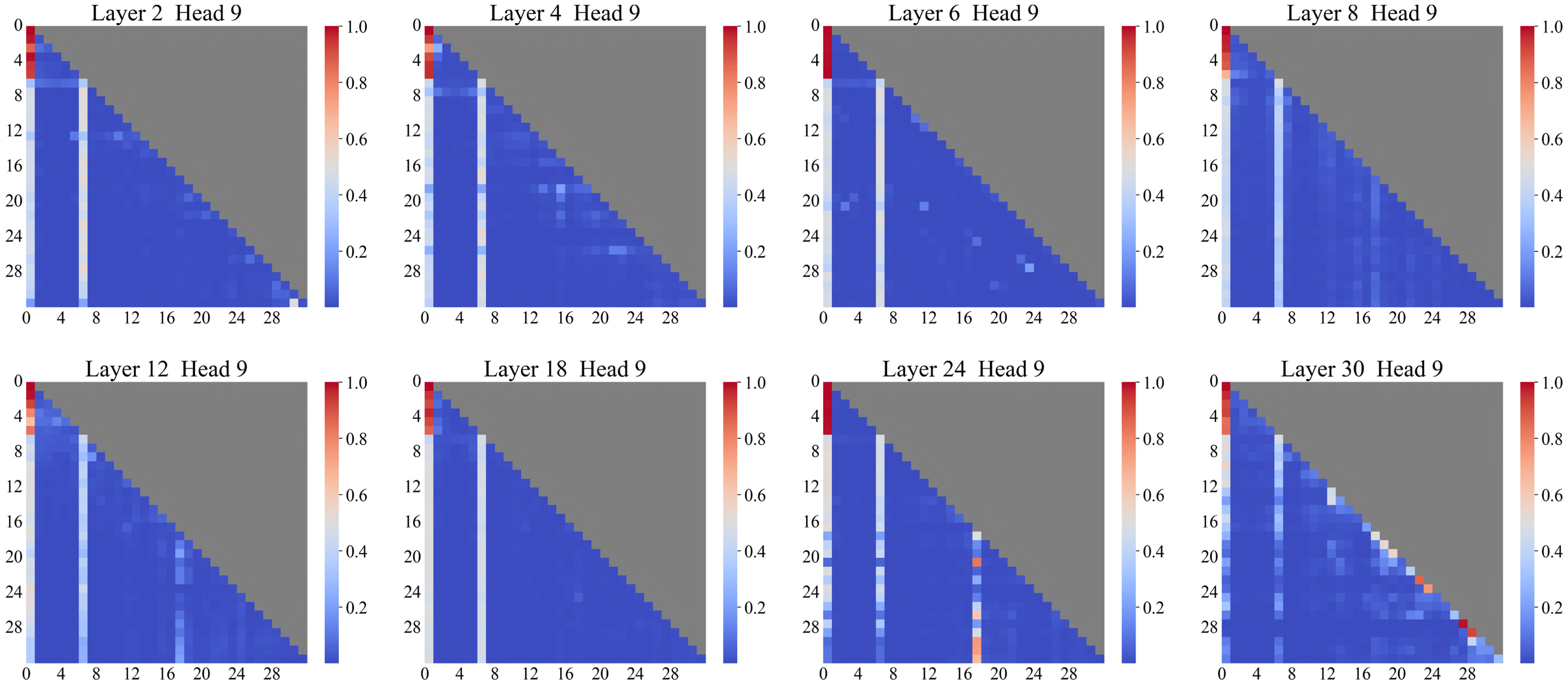}
  \caption{A visualization of attention scores in LLaMA2-7B with a sentence of 32 input tokens.
  The attention map reveals a distinct pattern: the majority of tokens exhibit minimal attention scores. Conversely, a minority of tokens are associated with significantly higher attention scores. This trend is observed consistently from the shallow to the deeper layers of the model.
  }
  \label{fig:attention_statics}
\end{figure}

\begin{table}[h]
\centering
\caption{Comparisons on MMLU with multi-choice QA.}
\label{tab:multi_choice}
\resizebox{\linewidth}{!}{\begin{tabular}{c!{\vrule width \lightrulewidth}c!{\vrule width \lightrulewidth}c!{\vrule width \lightrulewidth}c!{\vrule width \lightrulewidth}c!{\vrule width \lightrulewidth}c!{\vrule width \lightrulewidth}c!{\vrule width \lightrulewidth}c!{\vrule width \lightrulewidth}c} 
\midrule
\multirow{2}{*}{Method} & \multicolumn{2}{c!{\vrule width \lightrulewidth}}{LlaMA2-7B-8K} & \multicolumn{2}{c!{\vrule width \lightrulewidth}}{LlaMA2-7B-16K} & \multicolumn{2}{c!{\vrule width \lightrulewidth}}{LlaMA2-13B-16K} & \multicolumn{2}{c}{LlaMA2-13B-32K}  \\ 
\cmidrule{2-9}
                        & Full Self-attention & Ours                                          & Full Self-attention & Ours                                           & Full Self-attention & Ours                                            & Full Self-attention & Ours              \\ 
\midrule
MMLU                    & 33.34     & \textbf{37.55}                                         & 28.19       & \textbf{39.71}                                          & 27.17  & \textbf{48.11}                                           &  26.72   & \textbf{47.93}             \\
\midrule
\end{tabular}}
\end{table}

\subsection{Statistical Results of Sparse Attention Scores}\label{sec:static_attn}
We visualized LLaMA2-7B's attention scores on a sentence of 32 tokens in Figure \ref{fig:attention_statics} as a supplement to Figure \ref{fig:illustration}. As shown in the figure, these attention scores show consistent sparsity from shallow to deep layers. Same as demonstrated in existing methods \cite{beltagy2020longformer,xiao2024efficient}.

Based on these observations, our \sexyname of assessing token importance within each group using the attention from the last token is both rational and effective. The attention map visualization reveals a distinct pattern where tokens that are important to the query receive consistently high attention scores from all subsequent tokens. This indicates that important tokens, regardless of their position within a group, have a notable influence on attention distribution, suggesting that our method of importance assessment is capable of capturing these crucial tokens.

\subsection{More Ablation Studies}
\label{supp:sec:more_ablation}
\textbf{Effect of Group-wise Pooling Strategy}. 
For computational efficiency, we conduct ablations with LLaMA2-7B-16K. We adopt perplexity (PPL) to evaluate our \sexyname models. To investigate the effect of different pooling strategies, we conduct ablations with max pooling, mean pooling and our weighted pooing in Eqn.~(\ref{eq:pooling}).
In Table~\ref{tab:ablation-pooling}, our \sexyname with group-wise attention pooling strategy achieves superior results in both PPL (\eg, 2.85 \textit{vs.} 2.99).
This advantage arises since max pooling retains only the token with the highest response, thereby discarding the semantic importance of the remaining tokens.
Mean pooling averages all tokens within a group, which substantially dilutes the semantic significance of critical tokens.
In contrast, our \sexyname dynamically assigns aggregation weights of each token, facilitating a more comprehensive and efficient fusion.

\begin{table*}[h]
\centering
\caption{Effect of pooling strategy.}
\label{tab:ablation-pooling}
\resizebox{0.55\textwidth}{!}{%
\begin{tabular}{l|ccc}
\hline
Strategy & Mean Pooling & Max Pooling & \sexyname (Ours) \\ 
\hline
PPL $\downarrow$ & 2.99 & 2.99 & 2.85 \\
\hline
\end{tabular}%
}
\end{table*}

\textbf{Effect of Group Size $g$}.
To investigate the effect of different group sizes $g$, we implement the proposed \sexyname with different $g$ $\in$ $\{2,4,8,16,32,64\}$.
In Table ~\ref{tab:ablation-groupsize}, as $g$ increases, the computational efficiency improves while the PPL increases.
Upon closer examination, the smallest group size $g$ captures the most comprehensive information, which translates to the highest computational cost but also the optimal PPL. Conversely, an excessively large $g$ leads to an overemphasis on globality-aware attention, compressing information to the point where crucial semantic nuances may be overlooked, thereby curtailing performance.
To strike a balance between computational efficiency and model performance, we have selected $g\small{=}16$ as the default training setting.

\begin{table*}[h]
\centering
\caption{Effect of group size $g$.}
\label{tab:ablation-groupsize}
\resizebox{0.55\textwidth}{!}{%
\begin{tabular}{l|cccccc}
\hline
$g$ & 2 & 4 & 8 & 16 & 32 & 64 \\ 
\hline
PPL $\downarrow$ & 2.75 & 2.78 & 2.81 & 2.86 & 2.90 & 2.93 \\
Latency $\downarrow$ (ms) & 546.7 & 503.5 & 479.6 & 462.8 &457.74 & 456.0 \\ 
\hline
\end{tabular}%
}
\end{table*}

\textbf{Effect of Local Window Size $s$}.
To systematically evaluate the influence of different local window sizes $w$, we implement the proposed \sexyname across a range of $s$ $\in$ $\{256, 512, 1024, 2048,4096\}$.
In Table~\ref{tab:ablation-windowsize}, an increase in $s$ correlates with lower PPL, but this is counterbalanced by a rise in computational cost. A larger $s$ captures more contextual information with neighborhood tokens, but also increases computational demands.
Conversely, a smaller $s$, indicative of a limited receptive field, constrains the exchange of information within the locality-preserving module, resulting in diminished performance.
Striking a balance between computational efficiency and model efficacy, we opted for $s\small{=}1024$ as the default training setting.

\begin{table*}[h]
\centering
\caption{Effect of local window size $s$.}
\label{tab:ablation-windowsize}
\resizebox{0.55\textwidth}{!}{%
\begin{tabular}{l|ccccc}
\hline
$s$ & ~~256~~ & ~~512~~ & ~~1024~~ & ~~2048~~ & ~~4096~~  \\ 
\hline
PPL $\downarrow$ & 2.98 & 2.92 & 2.86 & 2.79 & 2.73 \\
Latency $\downarrow$ (ms) & 457.4 & 460.1 & 461.4 & 462.8  & 473.1 \\ 
\hline
\end{tabular}%
}
\end{table*}

\textbf{Effect of Different Updating Strategies}.
As mentioned in Section~\ref{sec:attention_fusion}, we have two updating strategies: 1) updating all the parameters during finetuning (full finetuning) and 2) only updating the parameters ${\bf W}^Q$, ${\bf W}^K$, ${\bf W}^V$ (partial finetuning). In Table~\ref{tab:update_strategy}, we compare these two variants of our methods on the Longbench-E benchmark. 
The variant that updates all parameters during finetuning achieves better performance because it allows the model to fully adapt to our proposed attention.
In contrast, partial fine-tuning, while limiting the model's adaptability due to fixed pre-trained features, still achieves competitive performance. This makes partial fine-tuning a practical choice in scenarios requiring rapid training or where computational resources are limited. Despite its constraints, partial fine-tuning can deliver performance close to that of full fine-tuning, offering a balanced trade-off between efficiency and accuracy.

\begin{table*}[h]
\centering
\caption{Effect of different updating strategies.}
\label{tab:update_strategy}
\resizebox{0.9\textwidth}{!}{%
\begin{tabular}{l|cccccc|c}
\hline
Strategies & Single-Doc. QA & Multi-Doc. QA & Sum. & FS. Learning & Synthetic & Code & ~~Avg.~~ \\ \hline
Partial Finetuning & 5.39 & 3.62 & 9.21 & 60.41 & 1.34 & 51.77 & 21.96
 \\
Full Finetuning & 5.62 & 4.34 & 8.99 & 59.60 & 0.48 & 54.40 & \textbf{22.24} \\ \hline
\end{tabular}%
}
\end{table*}

\subsection{Training Convergence Curve}
In the experiments, we finetune the LLaMA2-7B-32K and LLaMA2-7B-80K models equipped with our \sexyname on SlimPajama~\cite{cerebras2023slimpajama} dataset for 1,000 iterations.
We show the training convergence curves of both models with our \sexyname in Figure~\ref{fig:convergence_curves}.
From the results, by minimizing the training loss, both LLaMA2-7B-32K and LLaMA2-7B-80K models are able to converge very fast. The perplexity rapidly converges within approximately the first 100 iterations and remains stable over 1,000 iterations.
These results not only demonstrate the effectiveness and training stability of our proposed \sexyname, but also establish it has the potential to be a plug-and-play attention module incorporated into existing LLMs. Notably, the initial training loss of LLaMA2-7B-32K is higher than that of LLaMA2-7B-80K. This difference arises because LLaMA2-7B-32K is finetuned from the official LLaMA2-7B-4K model, which has a shorter context window and thus requires more significant adjustments to adapt to longer sequences. In contrast, LLaMA2-7B-80K is fine-tuned from a model pre-trained by Fu. et.al.~\cite{fu2024data}.

\begin{figure}[h]
  \centering
  \includegraphics[width=\linewidth]{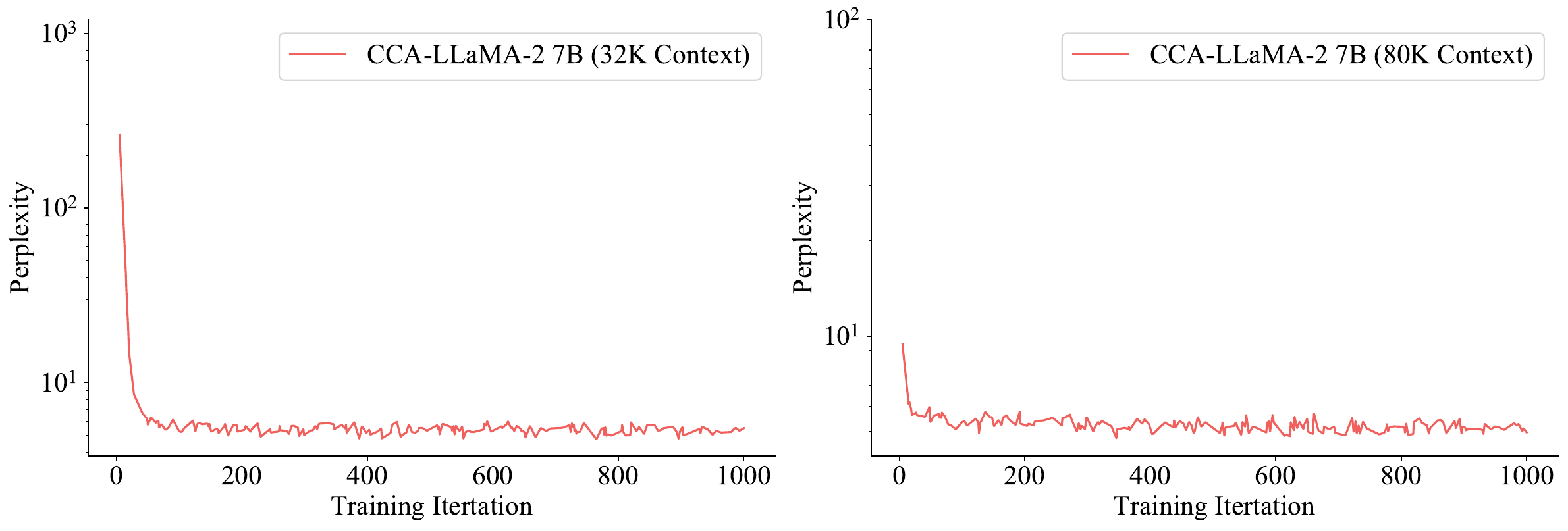}
  \caption{Convergence curves of our \sexynamellm models under different contexts.
  }
  \label{fig:convergence_curves}
\end{figure}

\end{document}